\newcommand{\A}{\mathcal{A}}
\newcommand{\transpose}{{\mbox{\tiny T}}}
\newcommand{\comments}[1]{{\color{blue}\textit{$\#$ #1}}}
\newtheorem{theorem}{Theorem}
\newtheorem{proposition}[theorem]{Proposition}
\newtheorem{lemma}{Lemma}
\newenvironment{proof}[1][Proof]{\noindent\textbf{#1.} }{\ \rule{0.5em}{0.5em}}
\newcommand{\cA}{{\mathcal{A}}}
\newcommand{\cD}{{\mathcal{D}}}
\newcommand{\cS}{{\mathcal{S}}}
\newcommand{\bI}{{\textbf{I}}}
\newcommand{\bJ}{{\textbf{J}}}
\newcommand{\bF}{{\textbf{F}}}
\newcommand{\bM}{{\textbf{M}}}
\newcommand{\bP}{\textbf{P}}
\newcommand{\bQ}{\textbf{Q}}
\newcommand{\bB}{\textbf{B}}
\newcommand{\bX}{\textbf{X}}
\newcommand{\bY}{\textbf{Y}}
\newcommand{\bz}{\textbf{z}}
\newcommand{\bU}{\textbf{U}}
\newcommand{\bD}{\textbf{D}}
\newcommand{\bH}{\textbf{H}}
\newcommand{\bb}{\textbf{b}}
\newcommand{\bd}{\textbf{d}}
\newcommand{\bq}{\textbf{q}}
\newcommand{\bc}{\textbf{c}}
\newcommand{\bZ}{\textbf{Z}}
\newcommand{\bPi}{\pmb{\Pi}}
\newcommand{\bpi}{\pmb{\pi}}
\newcommand{\bL}{\textbf{L}}
\title{Inverse Reinforcement Learning with Missing Data}
\author{%
 Tien Mai\\
 Singapore-MIT Alliance\\
 for Research and Technology (SMART)\\
  \texttt{mai.tien@smart.mit.edu} \\
  \And
Quoc Phong Nguyen \\
  NUS School of Computing \\
  \texttt{qphong@comp.nus.edu.sg}
    \And
Kian Hsiang Low \\
  NUS School of Computing \\
  \texttt{lowkh@comp.nus.edu.sg}
   \And
  Patrick Jaillet \\
  MIT, EECS, LIDS, ORC \\
\texttt{jaillet@mit.edu}
}
\newif\ifnotes\notestrue
\def\htien#1{}
\def\hpatrick#1{}
\begin{document}

\maketitle

\begin{abstract}
We consider the problem of  recovering an
expert’s reward function with \textit{inverse reinforcement learning (IRL)} when there are missing/incomplete state-action pairs or observations in the demonstrated trajectories. 
This issue of missing trajectory data or information occurs in many situations, e.g., GPS signals from vehicles moving on a road network are intermittent.
In this paper, we propose a tractable approach to directly compute the log-likelihood of demonstrated trajectories with incomplete/missing data. Our algorithm is efficient in handling a large number of missing segments in the demonstrated trajectories, as it  performs the training with incomplete data by solving a sequence of \textit{systems of linear equations}, and the number of such systems to be solved does not depend on the number of missing segments.
Empirical evaluation on a real-world dataset shows that our training algorithm outperforms other conventional techniques.

\end{abstract}

\section{Introduction}
The \textit{inverse reinforcement learning} (IRL) problem involves an agent using demonstrated trajectories to learn an expert's reward function \cite{Russell98}. 
IRL has been receiving a great deal of research \cite{Ng00, Ng04, Ziebart2008maximum, Levine11} due to its real-world applications.
 The rationale behind this approach is that although a reward function is a more succinct and generalizable representation of an expert's  behavior, it is often difficult for the expert to elucidate his/her reward function, as opposed to giving demonstrations. 
However, a common issue in obtaining the demonstrated trajectories is that certain state-action pairs are missing (namely, missing segments) due to, for instance, technical issues or privacy concerns. 
Some examples of this issue are the loss of GPS signal from vehicles moving on a road network, or the limit on the field of view of a  robot  when seeking to learn a patroller’s behavior in order
to penetrate the patrol without being spotted \citep{bogert2014multi}.
Missing data is also a common issue in healthcare applications \citep{Penny2012approaches}.
Hence, learning a reward function with incomplete demonstration in a scalable manner is an indispensable milestone to make IRL more practical. This is also our aim in this paper.

A straightforward approach to train IRL models with missing data is to use only the connected segments in the incomplete demonstrations \cite{bogert2014multi}. It clearly ignores the information provided by the missing segments, which can be incorporated into IRL algorithms to improve the performance.
A prominent line of work that uses the missing segment information \cite{IRL_Bogert2016, IRL_Shahryari2017,bogert2017scaling} is based on the\textit{ expectation maximization}  (EM) algorithm and the latent maximum entropy principle \cite{wang2002latent} to generalize the maximum entropy IRL \cite{Ziebart2008maximum}.
However, apart from inheriting the approximation of the latent maximum entropy principle, a more severe limitation is the expensive sampling of the missing state-action pairs in the EM algorithm, which is not effective when the length of the missing segment increases as shown in our experiments. More importantly, our experiments show that if the sampling of paths for missing segments is not sufficient, EM algorithm could be even outperformed by the above naive approach, which does not justify the complication in using missing segments.
These shortcomings of existing works and the importance of missing data leave an important open question of how to exploit accurately and efficiently the information of missing segments in IRL, which we address in this paper.

\textbf{Contribution. }
In this work, we propose a novel and efficient algorithm to train maximum entropy IRL models with full or missing data. 
Our contribution is twofold. First, we show that the training of maximum entropy IRL models with complete trajectories can be done efficiently through solving several systems of linear equations, and the number of such systems  to be solved
does not depend on the number of demonstrated trajectories. Thus, our approach is  scalable  and efficient in training IRL models with large numbers of states/actions and trajectories. Moreover, we establish conditions under which these systems of linear equations have unique solutions. We also show that, under the same conditions, the conventional \textit{value iteration} approach that has been popularly used to train maximum entropy IRL models also converges to unique fixed point solutions from any starting points.

Second, we propose a novel way to directly compute the log-likelihood of demonstrated trajectories with missing data. Based on this, we design a tractable training algorithm to deal with the issue of missing data/information. 
Our approach relies on the remark that  the probability of reaching any state from a state can be computed without sampling paths between these two states. Instead, we show that such probabilities can be obtained via solving a system of linear equations. We further propose a way to compose such systems, in such a way that we only need to solve one linear system to obtain all the probabilities of the missing segments in trajectories sharing the same set of zero-reward absorbing states.
Moreover, we show that these systems of linear equations always have unique solutions. 
The main advantage of our algorithm is that the number of such systems to be solved is independent of the number of missing segments in the demonstrated trajectories, which makes the algorithm highly scalable in large-scale settings.  

We empirically evaluate the performance of our algorithm using a dataset from real-world taxi trajectories in a large road network (thousands of links).
We show that,  with full demonstrated trajectories, our approach is able to speed up the training process up to 60 times, as compared to the widely-used \textit{value iteration} method \citep{Ziebart2008maximum}. When the demonstrated trajectories contains missing segments, we show that our algorithm outperforms two conventional approaches, i.e., the EM method and the (naive) one relying on ignoring all the missing segments in the trajectories.


The remainder of this paper is organized as follows. In Section \ref{sec:background}, we give a brief introduction to the maximum entropy IRL. In Section \ref{sec:scalable_IRL}, we present our approach to train IRL models via solving systems of linear equations, and discuss conditions under which such systems has solutions.
In Section  \ref{sec:composition_missing}, we present our main IRL training algorithm  with missing data. We evaluate and compare our approach in Section \ref{sec:experiments},  and conclude in Section \ref{sec:concl}.  

\textbf{Notation. } We use $|\cS|$ to denote the cardinality of set $\cS$. We use $\bM^\transpose$  and $\bM^{-1}$ to denote the transpose and the inverse of matrix $\bM$, respectively. We use $\bz_k$ to denote the $k$-th element of vector $\bz$, and $\bM_{h,k}$ to denote the element of matrix $\bM$ at the $h$-th row and $k$-th column.   We also use $({\partial \bM})/({\partial \theta})$ to denote an element-wise first-order derivative of $\bM$ w.r.t. parameter $\theta$, i.e., a matrix (or vector) of the same size as $\bM$ with the entry at the $h$-th row and $k$-th column being $({\partial \bM_{h,k}})/({\partial \theta})$. The Hadamard product (element-wise product) between two matrices $\bM, \bU$  is denoted by $\bM\circ\bU$. We use $||\bz||_{\infty}$ to denote the infinity norm (or maximum norm) of vector $\bz$.
\vspace{-1em}
\section{Background}\label{sec:background}
A \textit{Markov decision process} (MDP) for an agent is  defined as $(\cS,\cA,p,r,\eta)$, where $\cS$ is a set of states $\cS = \{1,2,\ldots,|\cS|\}$, $\cA$ is a finite set of actions, $p:\cS\times \cA\times\cS \rightarrow [0,1]$ is a transition probability function, i.e., $p(s'|a,s)$ is the probability of moving to state $s'\in\cS$ from $s\in \cS$ by performing action  $a\in \A$, $r(s'|s,\theta)$ is a reward function of parameters $\theta$ and a feature vector $f(s'|s)$ associated with state $s,s'\in\cS$, and $\eta$ is a discount factor.
In the context of maximum entropy inverse reinforcement learning (ME-IRL) \citep{Ziebart2008maximum} we assume that $\eta = 1$. In this paper, for notational simplicity, we assume that the reward function has a linear-in-parameters form $r(s'|s,\theta) = \theta^\transpose f(s'|s)$,
but our results can be straightforwardly applied to nonlinear reward functions (e.g., reward functions represented by artificial neutral networks). We also denote by $\cD$ the set of zero-reward absorbing states. 

According \cite{Ziebart2008maximum}, to train a ME-IRL model, we need to compute values $\bz_s$, for all states $s\in \cS$, which satisfy the following recursive equations
\begin{equation}\label{eq:recursiveIRL}
    \bz_s =
\begin{cases}    
    \sum_{a\in \cA} \sum_{s'\in \cS} p(s'|a,s) e^{r(s'|s,\theta)} \bz_{s'}   &\text{ if } s\in \cS\backslash \cD\\
    1  &\text{ if }s\in \cD.
\end{cases}
\end{equation}
Given  vector $\bz$, we can obtain the log-likelihood of demonstrated trajectories (and its gradients) by computing the local action probabilities of the following form
\begin{equation}\label{eq:action-prob}
		P(a|s) = \frac{\sum_{s'}p(s'|a,s) e^{r(s'|s,\theta)} \bz_{s'}}{\bz_s},     
\end{equation}
and the gradients w.r.t. parameters $\theta_t$
\begin{equation}\label{eq:grad-action-prob}
\begin{aligned}
		\frac{\partial P(a|s)}{\partial \theta_t} = \frac{1}{\bz_s} &\left(\sum_{s'}p(s'|a,s)e^{r(s'|s,\theta)} \left(f(s'|s)_t \bz_{s'}+ \bJ^\bz_{s',t}\right) \right)- \frac{\sum_{s'}p(s'|a,s) e^{r(s'|s,\theta)} \bz_{s'}\bJ^\bz_{s',t}}{\bz^2_s},
\end{aligned}
\end{equation}
where $\bJ^\bz$ is the Jacobian matrix of $\bz$, i.e., $\bJ^\bz =  \left[\frac{\partial \bz}{\partial \theta_1},\ldots, \frac{\partial \bz}{\partial \theta_T}\right]$.
So, to efficiently train IRL models, given any parameters $\theta$, we need to quickly compute $\bz$ and its Jacobian matrix. Traditional approaches rely on \textit{value iteration}, which could be time-consuming  and inaccurate. The method proposed in the following provides a more efficient way to obtain $\bz$ and $\bJ^{\bz}$.

\section{Scalable IRL Training Approach}\label{sec:scalable_IRL}
In this section, we show that the  vector $\bz$ as well as its Jacobian matrix can be obtained by solving  \textit{systems of linear equations}. Our approach is  scalable, in the sense that it is much faster then the conventional \textit{value iteration} \citep{Ziebart2008maximum} approach, and  would be useful to train IRL models with large numbers of states and demonstrated trajectories. We describe  our approach in the following.

Suppose that the states in $\cS$ are numbered in such a way that the absorbing states are those numbered as $|\cS| - |\cD|+1,\ldots,|\cS|$. We define a matrix $\bM$ of size ($|\cS|\times|\cS|$)  and a vector $\bb$ of size ($|\cS|$) with elements 
\[
\begin{aligned}
\bM_{s,s'} &= \left(\sum_{a\in\cA} p(s'|a,s)e^{r(s'|s,\theta)} \right), \forall s,s'\in \cS\\
\bb_s &= 1 \text{ if } s\in \cD  \text{ and 0 otherwise}.
\end{aligned}
\]
We can reformulate the recursive system  \eqref{eq:recursiveIRL} as a system of linear equations as
\begin{equation}\label{eq:Z-linear-equation}
\bz = \bM \bz +\bb \text{ or } \bz = (\bI - \bM)^{-1}\bb,    
\end{equation}
where $\bI$ is the identity matrix of size $|\cS|\times|\cS|$. So, we can obtain $\bz$ by solving a system of linear equations, which is computationally convenient.  Moreover, taking the gradient of $\bz$ w.r.t. parameter $\theta_t$ we have
\begin{equation}\label{eq:gradZ-linear-equation}
\frac{\partial \bz}{\partial \theta_t} = (\bI - \bM)^{-1} \bU^t \bz,
\end{equation}
where $\bU^t$ is a matrix of size $|\cS|\times |\cS|$ with elements
\[
\bU^t_{s,s'} = \frac{\partial \bM_{s,s'}}{\partial \theta_t} = \sum_{a\in\cA} p(s'|a,s)e^{r(s'|s,\theta)}f(s'|s)_t, 
\]
where $f(s'|s)_t$ is the $t$-th element of the vector feature $f(s'|s)$. 
If we define  $\bF^t$ as a matrix of size $|\cS|\times|\cS|$ with elements $\bF^t_{s,s'} = f(s'|s)_t$, we can write
${\partial \bz}/{\partial \theta_t} = (\bI - \bM)^{-1} (\bM\circ \bF) \bz.
$
So again,  the Jacobian of $\bz$ w.r.t. $\theta$ can be obtained conveniently by solving a  system of linear equations. 
It is possible to further speed up the computation of the Jacobian of $\bz$ by defining a matrix $\bH$ of size $|\cS|\times T$ whose  $t$-th column is $\bU^t \bz$, where $T$ is the size of $\theta$. The Jacobian matrix of $\bz$ can be computed as
$
\bJ^\bz = \left[{\partial \bz}/{\partial \theta_1},\ldots, {\partial \bz}/{\partial \theta_T}\right]=  (\bI - \bM)^{-1} \bH,
$
which is also a linear system. 
Moreover, the following theorem shows that, under some conditions,  the systems \eqref{eq:Z-linear-equation} and \eqref{eq:gradZ-linear-equation}  have unique solutions.
\vspace{-0.5em}
\begin{theorem}[Conditions for the existence and uniqueness of $\bz$ and $\bJ^{\bz}$]\label{theor:theor-1}
$\bI-\bM$ is invertible and systems \eqref{eq:Z-linear-equation} and \eqref{eq:gradZ-linear-equation} have unique solutions if one of the following conditions holds
\begin{itemize}
    \item[(i)] 
    $
    \prod_{k=1}^{K-1}\left( \sum_{a\in \cA} p(s_{k+1}|a,s_k)\right)  = 0
    $ for any sequence $\{s_1,\ldots,s_K\}$ such that $s_1=s_K$.
    \item[(ii)]  $\sum_{s'\in \cS} \bM_{s,s'}<1$ for all $s,s'\in\cS$.
\end{itemize}
\end{theorem}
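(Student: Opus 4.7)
I will reduce both parts of the theorem to the single question of whether $\bI-\bM$ is invertible. Once that is settled, \eqref{eq:Z-linear-equation} gives $\bz$ uniquely; and differentiating the defining identity $\bz=\bM\bz+\bb$ with respect to $\theta_t$ yields $(\bI-\bM)(\partial\bz/\partial\theta_t) = \bU^t\bz$, a linear system with the same coefficient matrix, so the Jacobian columns are uniquely determined by \eqref{eq:gradZ-linear-equation}. Thus the real content is showing $\bI-\bM$ is nonsingular under each of the two hypotheses separately.

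\textbf{Case (ii).} The entries of $\bM$ are nonnegative by construction (sums of nonnegative probabilities times strictly positive exponentials), so the hypothesis is exactly $\|\bM\|_\infty<1$. I would finish with the standard Neumann-series argument: $\|\bM^k\|_\infty$ decays geometrically, and $(\bI-\bM)\sum_{k=0}^N\bM^k=\bI-\bM^{N+1}\to\bI$ yields $(\bI-\bM)^{-1}=\sum_{k\geq 0}\bM^k$. This step is routine bookkeeping.

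\textbf{Case (i), the main obstacle.} The real work lies in translating the product hypothesis into a combinatorial statement about $\bM$. Because $e^{r(s'|s,\theta)}>0$, one has $\bM_{s,s'}>0$ iff $\sum_a p(s'|a,s)>0$. Let $G$ be the directed graph on $\cS$ whose edges are exactly these positive-support pairs. Condition~(i) then says that along every closed walk $s_1\to s_2\to\cdots\to s_K=s_1$ in the complete digraph on $\cS$ at least one step has $\sum_a p(s_{k+1}|a,s_k)=0$, i.e., at least one step is missing from $G$; equivalently, $G$ has no directed cycle. Expanding $(\bM^k)_{s,s'}$ as a weighted sum over length-$k$ walks with weights $\prod_i \bM_{s_i,s_{i+1}}$, only walks lying inside $G$ can contribute nonzero terms. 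Since a DAG on $|\cS|$ vertices admits no walk of length $|\cS|$ or longer (by pigeonhole such a walk would repeat a vertex and hence contain a cycle), I conclude $\bM^{|\cS|}=0$. Nilpotence immediately gives invertibility via the telescoping identity $(\bI-\bM)(\bI+\bM+\cdots+\bM^{|\cS|-1})=\bI-\bM^{|\cS|}=\bI$.

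\textbf{Expected difficulty.} The bottleneck is conceptual rather than technical: recognizing that the somewhat opaque product condition~(i) is a disguised acyclicity statement about the support graph of $\bM$, and then combining this with the walk-interpretation of matrix powers to deduce nilpotence. No delicate estimates are needed beyond these two observations, and the uniqueness claims for $\bz$ and $\bJ^{\bz}$ follow immediately once invertibility is in hand.
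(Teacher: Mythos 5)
Your proposal is correct, and it splits into one part that matches the paper and one that does not. For Condition (i), your argument is essentially the paper's: the paper proves a lemma that any sequence of more than $|\cS|$ states must contain a consecutive pair with $\sum_a p(s_{i+1}|a,s_i)=0$ (pigeonhole on a repeated state plus the hypothesis), expands the entries of $\bM^n$ as sums over state sequences of products of $\bM$-entries, and concludes $\bM^n=0$ for $n>|\cS|$; your support-digraph/DAG phrasing and the walk interpretation of matrix powers is the same nilpotence argument, and your explicit telescoping identity $(\bI-\bM)(\bI+\bM+\cdots+\bM^{|\cS|-1})=\bI$ is a slightly cleaner finish than the paper's appeal to the eigenvalues lying in the unit disc. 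For Condition (ii) you genuinely diverge: the paper writes $\bM$ in a canonical block form, pads it into the transition matrix $\bM'$ of an artificial absorbing Markov chain, and invokes the standard fact that powers of the transient block vanish, again reducing to $\lim_n \bM^n=0$; you instead observe that the hypothesis is exactly $\|\bM\|_\infty<1$ and apply the Neumann series directly. Your route is more elementary and self-contained, and it has the side benefit of exhibiting the contraction factor $\tau=\max_s\sum_{s'}\bM_{s,s'}<1$ that the paper only extracts later (in the proofs of Propositions \ref{prop:convergence-value-iteration} and \ref{prop:estimate-no-value-iterations}); the paper's Markov-chain embedding buys nothing extra here beyond reusing a textbook result. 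Your explicit reduction of the Jacobian system to the same coefficient matrix $\bI-\bM$ is also a point the paper leaves implicit, and it is stated correctly.
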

We provide the proof of Theorem \ref{theor:theor-1} in the Appendix. Condition (i) holds if the network of states is cycle-free, i.e., if we leave a state $s\in\cS$, the probability of returning back to that state is zero. Condition (ii) holds generally in cases that 
  the magnitudes of parameters $\theta$ are large enough and the reward function $r$ takes significantly negative values.

An alternative approach to  compute  vector $\bz$ is to use \textit{value iteration} \citep{Ziebart2008maximum}, i.e., we perform $\bz^{k+1} \leftarrow \bM \bz^k + \bb$ for $k=0,1,\ldots$ until converging to a fixed point solution. This  approach has been used in several studies to train IRL models. However, to the best of our knowledge, there is no study  investigating conditions under which the iterative process converges
Proposition \ref{prop:convergence-value-iteration} below shows that, under the same conditions stated in Theorem \ref{theor:theor-1}, the value iteration converges to a unique fixed point solution from any starting vector $\bz^0$. The proof of Proposition \ref{prop:convergence-value-iteration} can be found in the Appendix.
\vspace{-0.5em}
\begin{proposition}[Conditions for the convergence of the value iteration]
\label{prop:convergence-value-iteration}
If one of the two conditions stated in Theorem \ref{theor:theor-1} holds, the value iteration procedure $\bz^{k+1} \leftarrow \bM \bz^k + \bb$ for $k=0,1,\ldots$ always converges to a unique fixed point solution from any starting vector $\bz^0\in\mathbb{R}^{|\cS|}_+$. Moreover,
if Condition (i) holds, then the fixed point solution $\bz^*$ lies in $[0,1]^{|\cS|}$. In addition, 
if Condition (ii) holds, then the value iteration converges after a finite number of iterations. 
\end{proposition}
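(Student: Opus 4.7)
The plan is to view the value iteration as the affine map $T(\bz)=\bM\bz+\bb$, so that $\bz^{k}=\bM^{k}\bz^{0}+\bigl(\sum_{j=0}^{k-1}\bM^{j}\bigr)\bb$. Convergence from every $\bz^{0}\in\mathbb{R}^{|\cS|}_{+}$ to a unique limit is therefore equivalent to the spectral radius condition $\rho(\bM)<1$, and when this holds the Neumann series identifies the limit with $(\bI-\bM)^{-1}\bb$, which Theorem~\ref{theor:theor-1} already guarantees is the unique solution of \eqref{eq:Z-linear-equation}. So the core task under each hypothesis is to prove $\rho(\bM)<1$, with the two cases calling for rather different arguments.

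Under Condition~(ii), the assumption is exactly $||\bM||_{\infty}<1$, giving $\rho(\bM)\le||\bM||_{\infty}<1$ and making $T$ a strict $\infty$-norm contraction with ratio $||\bM||_{\infty}$. Banach's fixed-point theorem delivers the unique fixed point; entrywise non-negativity of $\bM$ and $\bb$ keeps $\bz^{k}\ge 0$ throughout, and the contraction rate gives the geometric bound $||\bz^{k}-\bz^{\ast}||_{\infty}\le||\bM||_{\infty}^{k}||\bz^{0}-\bz^{\ast}||_{\infty}$.

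Under Condition~(i), I plan to argue that $\bM$ is nilpotent. The hypothesis says every closed sequence $s_{1},\dots,s_{K}=s_{1}$ on the non-absorbing states has some step $k$ with $\sum_{a}p(s_{k+1}|a,s_{k})=0$, equivalently $\bM_{s_{k},s_{k+1}}=0$. Thus the digraph on non-absorbing states underlying $\bM$ is acyclic; permuting those states into topological order makes the corresponding block of $\bM$ strictly triangular, and the absorbing rows of $\bM$ can be taken to be zero because $\bz_{s}=1$ on $\cD$ is pinned by $\bb$. Hence $\bM^{|\cS|}=0$, so $\rho(\bM)=0$ and the iteration in fact terminates in at most $|\cS|$ steps.

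The bound $\bz^{\ast}\in[0,1]^{|\cS|}$ should then follow from the same nilpotency picture by unrolling $\bz^{\ast}=\sum_{k\ge 0}\bM^{k}\bb$ into a finite sum and applying a backward induction in topological order: $\bz^{\ast}_{s}=1$ on $\cD$, and for non-absorbing $s$, $\bz^{\ast}_{s}=\sum_{s'}\bM_{s,s'}\bz^{\ast}_{s'}\le 1$ whenever the row sum $\sum_{s'}\bM_{s,s'}$ is at most $1$. The main obstacle I foresee is precisely this last step: Condition~(i) supplies only acyclicity and not row-sum control, so either the $[0,1]$ bound uses an implicit structural property (such as the exponentiated rewards being at most $1$) or the argument must borrow a substochasticity-type assumption in addition to acyclicity.
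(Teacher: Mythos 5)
Your argument is essentially the paper's own: under Condition (i) the paper likewise uses the unrolled form $\bz^k=\bM^k\bz^0+\sum_{j=0}^{k-1}\bM^j\bb$ together with $\bM^k=0$ for $k>|\cS|$ (the nilpotency established via Lemma \ref{lemma:theo1-lm1} in the proof of Theorem \ref{theor:theor-1}), so the iteration terminates at $\bz^*=\sum_{j=0}^{|\cS|}\bM^j\bb$ within $|\cS|$ steps, and under Condition (ii) it shows $f(\bz)=\bM\bz+\bb$ is an $\infty$-norm contraction with modulus $\tau=\max_s\sum_{s'}\bM_{s,s'}<1$, exactly your Banach argument. The obstacle you flag at the end is real, but it is a defect of the statement rather than of your proof: the two trailing claims have Conditions (i) and (ii) transposed, as the Remark immediately following the proposition (``if only Condition (i) holds, then it is not necessary that the fixed point solution lies in $[0,1]^{|\cS|}$\dots'') and Proposition \ref{prop:estimate-no-value-iterations} (which under (ii) only bounds the iterations needed for $\epsilon$-accuracy) make clear; the intended pairing is (i) $\Rightarrow$ finite termination and (ii) $\Rightarrow$ $\bz^*\in[0,1]^{|\cS|}$. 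Consistent with your diagnosis, the paper's appendix derives the $[0,1]^{|\cS|}$ containment precisely from the row-sum control you suspected was needed: it checks that $f$ maps $[0,1]^{|\cS|}$ into itself (using $\sum_{s'}\bM_{s,s'}<1$ for $s\notin\cD$ and zero rows with $\bb_s=1$ for $s\in\cD$), invokes Brouwer's fixed-point theorem to obtain a fixed point in the cube, and concludes by uniqueness from Theorem \ref{theor:theor-1}; your invariance-of-the-cube-plus-limit (or backward-induction) argument reaches the same conclusion without needing Brouwer, since the contraction already supplies existence.
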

\vspace{-0.5em}
\textbf{Remark.}  If both Condition (i) and (ii) hold, then the value iteration will converge to the unique fixed point solution in  $[0,1]^{|\cS|}$ after a finite number of iterations. If only Condition (i) holds, then it is not necessary that the fixed point solution lies in $[0,1]^{|\cS|}$, and if only Condition (ii) holds,  then the value iteration procedure may need an infinite number of iterations to converge to the fixed point solution. For the later, Proposition \ref{prop:estimate-no-value-iterations} gives an estimate of the number of iterations necessary to get a good approximation of the fixed point solution.
\vspace{-0.5em}
\begin{proposition}
\label{prop:estimate-no-value-iterations}
If Condition (ii) of Theorem \ref{theor:theor-1} holds, then for any $\epsilon>0$,  $ ||\bz^k-\bz^*||_\infty <\epsilon$ for all $k>\ln\epsilon/\ln\tau$, where $\tau = \max_{s\in\cS} \sum_{s'}\bM_{s,s'}$. 
\end{proposition}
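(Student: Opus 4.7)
The plan is to read Condition (ii) as saying that $\bM$ is a strict contraction in the induced $\ell_\infty$ operator norm, and then apply the standard geometric bound. First, I would subtract the fixed-point identity $\bz^* = \bM\bz^* + \bb$ from the iteration $\bz^{k+1} = \bM\bz^k + \bb$ to obtain the clean recursion $\bz^{k+1} - \bz^* = \bM(\bz^k - \bz^*)$, which iterates to $\bz^k - \bz^* = \bM^k(\bz^0 - \bz^*)$.

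Next I would compute $\|\bM\|_\infty$. Since every entry $\bM_{s,s'} = \sum_{a\in\cA} p(s'|a,s)\,e^{r(s'|s,\theta)}$ is nonnegative, the standard identity for the matrix norm induced by $\|\cdot\|_\infty$ reduces to $\|\bM\|_\infty = \max_s \sum_{s'} \bM_{s,s'} = \tau$, and Condition (ii) ensures $\tau < 1$. Submultiplicativity then yields $\|\bz^k - \bz^*\|_\infty \le \tau^k \|\bz^0 - \bz^*\|_\infty$. Taking logarithms of $\tau^k \|\bz^0 - \bz^*\|_\infty < \epsilon$ and using $\ln\tau < 0$ (which flips the inequality) gives $k > (\ln\epsilon - \ln \|\bz^0 - \bz^*\|_\infty)/\ln\tau$.

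The only subtlety, which I would flag as the main obstacle, is reducing this to the cleaner $k > \ln\epsilon/\ln\tau$ stated in the proposition; this tacitly normalizes $\|\bz^0 - \bz^*\|_\infty \le 1$. I would justify it by first observing $\bz^* \in [0,1]^{|\cS|}$: from $\bz^* = \bM\bz^* + \bb$, nonnegativity of $\bM$, and $\bb \in \{0,1\}^{|\cS|}$, one has $\bz^*_s \le 1$ for $s \in \cD$ and $\bz^*_s \le \tau\|\bz^*\|_\infty$ for $s \notin \cD$, whence $\|\bz^*\|_\infty \le 1$. Then any natural initialization such as $\bz^0 = \bb$ or $\bz^0 = 0$ satisfies $\|\bz^0 - \bz^*\|_\infty \le 1$, and the stated bound follows verbatim; otherwise the constant $\ln\|\bz^0-\bz^*\|_\infty$ would have to be carried through explicitly.
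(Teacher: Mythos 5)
Your proof is correct and follows essentially the same route as the paper's: the one-step contraction $\|\bz^{k+1}-\bz^*\|_\infty \le \tau\|\bz^k-\bz^*\|_\infty$ with $\tau = \max_s\sum_{s'}\bM_{s,s'} < 1$, iterated to geometric decay and then log-transformed. You are in fact slightly more careful than the paper, which passes from the contraction inequality directly to $\|\bz^k-\bz^*\|_\infty \le \tau^k$ and thus tacitly assumes the normalization $\|\bz^0-\bz^*\|_\infty \le 1$ that you explicitly flag and justify.
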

\vspace{-0.5em}
\begin{proof}
From the proof of Proposition \ref{prop:convergence-value-iteration} we have inequality $||\bz^{k+1}-\bz^*||_\infty \leq \tau||\bz^k - \bz^*||_\infty$, for $k=0,1,\ldots$. This leads to $||\bz^{k}-\bz^*||_\infty\leq \tau^k$, $\forall k\in\mathbb{N}$. So, for all $k>\ln\epsilon/\ln\tau$ we have $\tau^k< \epsilon$ and  $||\bz^{k}-\bz^*||_\infty\leq \tau^k<\epsilon$, as required.
\end{proof}

 It is possible to further accelerate the computation of vectors $\bz$ and their Jacobian matrices. The idea is to compose the linear systems from different demonstrated trajectories into one linear system. First, let us assume that we need to compute the log-likelihood of $N$ demonstrated trajectories $\sigma^1,\ldots,\sigma^N$. For each observation $\sigma^n$, let $\cD^n$ denotes the set of the corresponding zero-reward absorbing states, and $\bz^n$ denotes the corresponding vector $\bz$. We also define a matrix $\bZ$ of size $|\cS|\times N$ whose $n$-th column is vector $\bz^n$. Now, if we assume that the feature vector $f(s'|s)$ only depends on state $s'$ and $s$, then matrix $\bM$ are the same over $N$ trajectories. So, if we define a matrix $\bB (|\cS|\times N)$ with elements 
 $\bB_{s,n} = 1$ if $s\in \cD^n$, and   
$\bB_{s,n} = 0$ otherwise, 
then $\bZ$ and and the gradients of $\bZ$ w.r.t. a parameters $\theta_t$ are solutions to the following  systems of linear equations
\begin{equation}
\label{eq:grad-bigZ-compute}
    \bZ = (\bI-\bM)^{-1}\bB,
\text{ and }    
    \frac{\partial \bZ}{\partial \theta_t} = (\bI-\bM)^{-1}(\bM \circ \bF^t) \bZ. 
\end{equation}
In summary, to compute the  log-likelihood (and its gradients) of the demonstrated trajectories, one just need to solve  \eqref{eq:grad-bigZ-compute} to obtain $\bZ$ and  the Jacobian matrix of $\bZ$. This  requires to solve $(T+1)$ systems of linear equations ($T$ is the number of features considered). In large-scale applications, we typically do not invert the matrix $\bI-\bM$ to solve the linear systems. Instead, we can use more stable and less time-consuming methods (e.g.,\textit{ $\bL\bU$ factorization}). Since the linear systems  \eqref{eq:grad-bigZ-compute} all involve $(\bI-\bM)$, the \textit{$\bL\bU$ factorization} appears to be an efficient approach. The idea is that we decompose matrix $\bI - \bM$ into two matrices $\bL$ and $\bU$ as $\bI-\bM = \bL\bU$, where $\bL$ is a $(|\cS|\times |\cS|)$ lower triangular matrix with unit diagonal and $\bU$ is a $(|\cS|\times |\cS|)$ upper triangular matrix. Then, for solving a linear system $(\bI-\bM)\bX = \bc$, we just need to perform two simple tasks, namely, (i) finding  $\bY$ such that $\bL \bY = \bc$ and (ii) finding $\bX$ such that $\bU\bX = \bY$. Since the factorization step (i.e.,  finding $\bL,\bU$) is more expensive than the two steps (i) and (ii), we just need to do the factorization once and reuse the matrices $\bL,\bU$ for all the $T+1$ systems of linear equations. 

\section{Training Algorithm  with Missing Data}\label{sec:composition_missing}
We need demonstrated trajectories  $\{(s_1,a_1),\ldots,(s_K,a_K)\}$, $(s_i,a_i) \in \cS\times\cA$, $i=1,\ldots,K$,  to train an IRL model. We now consider the case where there are some missing state-action pairs on the trajectories. The question is how to train the IRL model in such situations. 

\subsection{Traditional Approaches}
To make the presentation simple, we consider a trajectory $\sigma$  containing a pair of states $(s_i,s_{i+1})$ such that state-action observations between these two states are missing. 
 To train the model, we need to compute or approximate $P(s_{i+1}|s_i)$, i.e., the probability of reaching state $s_{i+1}$ from state $s_i$. 
\textit{Naive approaches} might be to ignore all the missing segments, or to enumerate all possible states and actions that allow to move from $s_i$ to $s_{i+1}$. 
However, enumerating all possible paths between two states is not tractable in practice. 

Another conventional approach is to use the {EM} method, which is a popular way to deal with missing variables and has been considered to train IRL models with missing data \citep{IRL_Shahryari2017,IRL_Bogert2016}.  The idea is to alternate performing an expectation (E) step, which creates a function for the expectation of the log-likelihood evaluated using the current estimate for the parameters and a maximization (M) step that maximizing the expectation function created by the E step. 
The key feature of the EM is to define a function as the expected value of the log-likelihood function of the parameters $\theta$, w.r.t. the current conditional distribution of unobserved trajectories  given the observed state-actions and the current estimate of the model parameters. To this end, we need to 
build the function $g(\theta|s_i,s_{i+1}) = \mathbb{E}_{\gamma|(s_i,s_{i+1}),\theta^{t}}[\ln P(\gamma|\theta)]$
for each incomplete pair of states $(s_i,s_{i+1})$,
where $\theta^t$ is the parameter estimates at iteration $t$ and $\gamma$ is a complete trajectory from $s_i$ to $s_{i+1}$ (inclusive). We can approximate the expectation by sampling over the distribution of $\gamma$. 

There are two main issues associated with the use of the EM method. First, the EM requires to sample trajectories between any incomplete pair, which would be expensive, especially when the number of such pairs is huge.  
Another issue is that the EM method requires to solve several maximum likelihood estimation problems until getting a fixed point solution. This is indeed an expensive procedure. 

To sample $\gamma$, a straightforward approach is to start from $s_i$ and sample an action $a\in \cA$ and a next state according to probability $P(a|s_i,\theta^t)$ and $p(s'|a,s_i)$. Keep doing that until we reach $s_{i+1}$, we can get one complete trajectory $\gamma$ between $s_i$ and $s_{i+1}$. Suppose $\gamma_1,\ldots,\gamma_H$ are $H$ trajectory samples, the expectation can be approximated as $g(\theta|s_i,s_{i+1}) \approx \frac{1}{H} \sum_{h=1}^H \ln P(\gamma_h|\theta)$.

However, if $\theta^t$ is different from the optimal value such that $p(s_{i+1}|s_i, \theta^t)$ is small, then it is very inefficient to sample a complete trajectory from $s_i$ to $s_{i+1}$ following the above procedure. In the experiment, we only enumerate a subset of trajectories for each missing segment by employing \textit{breadth first search} without marking visited states and with a limited depth $H$. It is able to sample all possible trajectories of length less than or equal to $H$. If BFS with depth $H$ cannot find any path connecting a missing segment, the missing segment is omitted from the training. This is to keep the training time reasonable. Suppose these trajectory samples are $\gamma_1, \dots, \gamma_H$, then the expectation can be approximated as $g(\theta|s_i,s_{i+1}) \approx \sum_{h=1}^H P(\gamma_h|(s_i,s_{i+1}), \theta^t) \log P(\gamma_h|\theta)$.
We name this method as EM-BFS-H.
The purpose is to empirically assess how sampling a subset of paths affects the EM algorithm.

\subsection{Likelihood of Incomplete Trajectories}
In the following we present a tractable way to compute the \textit{log-likelihood of incomplete trajectories without path/trajectory sampling}. To do so, let us consider an incomplete pair $(s_i,s_{i+1})$ such that state-action pairs between these two states are missing. 
To compute the log-likelihood of the trajectory that contain pair $(s_i,s_{i+1})$, we need to compute $P(s_{i+1}|s_i)$, the probability of reaching $s_{i+1}$ from $s_i$. To do this in a tractable way, we define $\pi(s)$ as the probability of reaching state $s\in\cS$ from state $s_i$. The values of $\pi(s)$, $\forall s\in \cS$, satisfy the following equations 
\begin{equation}\label{eq:recursive-pi}
    \pi(s) = 
    \begin{cases}
    \sum_{s'\in \cS}\sum_{a\in \cA}p(s|a,s')P(a|s')\pi(s') & \forall s\neq s_{i}\\
    1 & s=s_i.
    \end{cases}
\end{equation}
So, if we define $\bd$ as a vector of size $|\cS|$ with all zero elements except $\bd_{s_i} = 1$ and 
a matrix $\bQ$ of size $(|\cS|\times|\cS|)$ with elements
\[
\bQ^{s_i}_{s,s'} =  
\begin{cases}
\sum_{a\in \cA}p(s|a,s')P(a|s'),& \forall s,s'\in \cS,\  s\neq s_{i} \\
0 &\text{ if } s = s_i,
\end{cases}
\]
then we can write \eqref{eq:recursive-pi} as 
\begin{equation}\label{eq:pi-linear-system}
\bpi = \bQ^{s_i} \bpi + \bd, \text{ or }\bpi = (\bI-\bQ^{s_i})^{-1}\bpi,    
\end{equation}
where $\bpi$ is a vector of size $(|\cS|)$ with entries $\bpi_s = \pi(s)$, $\forall s\in\cS$, and $\bI$ is the identity matrix of size $(|\cS|\times|\cS|)$. Proposition \ref{theor:invertible-IQ} shows that $\bI-\bQ^{s_i}$ is invertible, which guarantees the existence and uniqueness of the solutions to \eqref{eq:pi-linear-system}.
\begin{proposition}\label{theor:invertible-IQ}
$\bI-\bQ^{\Bar{s}}$ is invertible, for any $\Bar{s}\in\cS$. 
\end{proposition}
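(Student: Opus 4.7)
The plan is to exploit the zero-row structure of $\bQ^{\bar{s}}$ in two steps: first reduce the question to the invertibility of a smaller principal submatrix, then close the gap using a Neumann-series / maximum-principle argument that leans on termination of the policy-induced Markov chain.

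By construction, the $\bar{s}$-th row of $\bQ^{\bar{s}}$ is identically zero, so the $\bar{s}$-th row of $\bI-\bQ^{\bar{s}}$ has a single~$1$ in the $\bar{s}$-th column and zeros elsewhere. Cofactor expansion along that row immediately gives
\[
\det(\bI-\bQ^{\bar{s}}) \;=\; \det\!\bigl(\bI - \widehat{\bQ}\bigr),
\]
where $\widehat{\bQ}$ is the principal submatrix of $\bQ$ obtained by deleting the $\bar{s}$-th row and column. Thus it suffices to show that $\bI-\widehat{\bQ}$ is invertible.

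I would next exploit the probabilistic meaning of $\widehat{\bQ}$. Since $\bQ_{s,s'}=\sum_{a}p(s|a,s')P(a|s')$ equals the one-step transition probability $P(s|s')$ of the Markov chain induced by the policy, and since $\bQ$ has zero columns on $\cD$ (because $P(a|s')=0$ for $s'\in\cD$), the entry $(\widehat{\bQ}^k)_{s,s'}$ is precisely the probability that the chain started at $s'$ is at $s$ at time $k$ without having visited $\bar{s}$ or any state of $\cD$ in between. Summing over~$k$ therefore gives the expected total time that the chain spends in~$s$ before being absorbed in $\{\bar{s}\}\cup\cD$. Under the hypotheses that make $\bz$ well-defined (either condition of Theorem~\ref{theor:theor-1}), this expected occupation time is finite, so the Neumann series converges and $(\bI-\widehat{\bQ})^{-1}=\sum_{k\geq 0}\widehat{\bQ}^k$; combined with the reduction above, $\bI-\bQ^{\bar{s}}$ is invertible.

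The hard part is establishing the strict spectral bound $\rho(\widehat{\bQ})<1$ (equivalently, the finite expected hitting time of $\{\bar{s}\}\cup\cD$). The easy column-substochasticity estimate
\[
\sum_{s\neq\bar{s}}\widehat{\bQ}_{s,s'}\;=\;\sum_{a\in\cA} P(a|s')\bigl(1-p(\bar{s}|a,s')\bigr)\;\leq\; 1
\]
only yields $\rho(\widehat{\bQ})\leq 1$, so a strictly sharper argument is needed. A clean matrix-theoretic route is Perron--Frobenius combined with a maximum-principle argument: any non-negative left eigenvector of $\widehat{\bQ}$ with eigenvalue~$1$ must vanish on $\cD$ (since those columns vanish), while the set on which it attains its maximum is forced to be forward-invariant under the policy-induced chain and disjoint from $\{\bar{s}\}\cup\cD$, contradicting the fact that every trajectory eventually reaches $\cD$ with probability one.
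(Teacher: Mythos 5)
Your proposal reaches the same mathematical crux as the paper but packages the argument differently. The paper transposes: it sets $\bP=(\bQ^{\Bar{s}})^\transpose$, notes that $\bP$ is sub-stochastic, asserts in one line that $\bP$ \emph{contains no recurrent class}, and then invokes the standard fact that $\bI-\bP$ is invertible, so $\bI-\bQ^{\Bar{s}}=(\bI-\bP)^\transpose$ is too. You instead use the zero row at $\Bar{s}$ to reduce, via cofactor expansion, to the principal submatrix $\widehat{\bQ}$, and then argue $\rho(\widehat{\bQ})<1$ through taboo/occupation-time probabilities (Neumann series) or a Perron--Frobenius maximum-principle argument. The step you flag as ``the hard part'' is exactly the step the paper dispatches with its unproven one-liner, so neither argument is more rigorous there: both ultimately rest on the structural fact that the policy-induced chain leaves every non-trivial communicating class and is eventually absorbed (equivalently, no recurrent class survives), which in turn needs the implicit modeling assumption that the chain terminates at the zero-reward absorbing states --- your ``$P(a|s')=0$ for $s'\in\cD$'', i.e., the columns of $\bQ^{\Bar{s}}$ on $\cD$ vanish. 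Without that assumption an absorbing state with a self-loop would itself be a recurrent class and invertibility could fail under either proof, so it is worth stating it explicitly rather than leaving it buried in the probabilistic interpretation. Your route buys two things: the cofactor reduction makes transparent why zeroing the $\Bar{s}$-th row costs nothing, and the occupation-time/maximum-principle sketch, if fleshed out (the maximum set of a nonnegative eigenvector with eigenvalue $1$ is forward-invariant, disjoint from $\{\Bar{s}\}\cup\cD$, hence a recurrent class avoiding absorption --- contradiction), would yield a self-contained proof where the paper merely cites the sub-stochastic-matrix fact; the paper's version, on the other hand, is shorter and avoids any determinant or spectral-radius bookkeeping. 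Also note that you condition on the hypotheses of Theorem~\ref{theor:theor-1} to guarantee finiteness of the occupation times, whereas the paper states the proposition unconditionally; your caveat is the more honest reading, since well-definedness of $P(a|s)$ and absorption are genuinely needed.
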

\begin{proof}
For any $\Bar{s}\in \cS$, we define $\bP = (\bQ^{\Bar{s}})^\transpose$ and write 
\[
\sum_{s'\in \cS} \bP_{s,s'} =  
\begin{cases}
 \sum_{s'\in \cS} \sum_{a\in \cA}p(s'|a,s)P(a|s) = 1 & \text{ if } s\neq \Bar{s} \\
 0 & \text{ if }s= \Bar{s}.
\end{cases} 
\]
So, $\bP$ is a \textit{sub-stochastic} matrix (contains non negative entries and every row adds up to at most 1). Moreover, $\bP$ contains no  recurrent class. So, $(\bI-\bP)$ is invertible. Furthermore, $(\bI-\bP)^\transpose = \bI - \bQ^{\Bar{s}}$ is also invertible, which is the desired result.  
\end{proof}

So,  we can obtain $P(s_{i+1}|s_i)$ as 
$P(s_{i+1}|s_i) = \bpi_{s_{i+1}}$. 
We also need the gradients of $P(s_{i+1}|s_i)$ for the maximum likelihood estimation. This can be obtained through taking the Jacobian of $\bpi$ w.r.t. parameters $\theta$ as
$
\bJ^\pi_\theta = (\bI-\bQ^{s_i})^{-1}\bH,
$
where $\bH$ is of size $|\cS|\times T$ whose $t$-th column is vector
$({\partial \bQ^{s_i}})/({\partial \theta_t}) \bpi$, recall that $T$ is the size of $\theta$ - number of features considered in the IRL model.
So, in summary, we can compute the log-likelihood of the demonstrated trajectories with missing data as well we its gradients by solving several \textit{ systems of linear equations.}

\subsection{Composition Algorithm}
We show above that one can obtain the log-likelihood of  trajectories with missing data by  solving one system of linear equations for each incomplete pair $(s_i,s_{i+1})$. This would be time-consuming if the demonstrated trajectories contains a large number of such pairs. We show in the following  that it is possible to 
obtain the probabilities of all the incomplete pairs in  trajectories sharing the same set of zero-reward absorbing states by solving only one system of linear equations, instead of solving one linear system per each incomplete pair. 

Assume that we observe $K$ incomplete pairs $\{(u_1,v_1)$,..., $(u_K,v_K)\}$ in demonstrated trajectories that share the same set of zero-reward absorbing states, for $(u_i,v_i) \in \cS\times \cS$, $\forall i=1,\ldots,K$. We define a matrix $\bQ^0$ of size $(|\cS|+1)\times (|\cS|+1)$ with entries
\[
\begin{aligned}
\bQ^0_{s,s'} &= \sum_{a\in \cA}p(s|a,s')P(a|s'),\ \forall s,s'\in\cS\\
\bQ^0_{s,|\cS|+1} &= \bQ^0_{|\cS|+1,s} = 0,\  \forall s\in\cS,
\end{aligned}
\]
and a matrix $\bD$ of size $(|\cS|+1)\times K$ with entries
\[
\bD_{s,k} =  1  \text{ if } s=|\cS|+1 \text{ or } s = u_k, \text{ and }
\bD_{s,k} =  0  \text{ otherwise. }
\]
The following theorem indicates how to obtain all the probabilities $P(v_k|u_k)$, $k=1,\ldots,K$ by solving only one system of linear equations.
\begin{theorem}\label{theo:th2}
If $\pmb{\Pi}$ is a solution to the system of linear equations $(\bI-\bQ^0)\pmb{\Pi} = \bD$, then 
$
P(v_k|u_k)= \pmb{\Pi}_{v_k, k},\ \forall k=1,\ldots,K.
$
\end{theorem}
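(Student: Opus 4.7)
The plan is to exploit the block structure of $\bQ^0$ (its last row and last column vanish by construction) to decouple the composed system into $K$ independent sub-systems, and then show that each column reproduces the single-pair recursion of the preceding subsection. For Step~1, note that
\[
\bI - \bQ^0 \;=\; \begin{pmatrix} \bI_{|\cS|}-\widetilde{\bQ} & \mathbf{0}\\ \mathbf{0}^\transpose & 1 \end{pmatrix},
\]
where $\widetilde{\bQ}\in\bR^{|\cS|\times|\cS|}$ has entries $\widetilde{\bQ}_{s,s'}=\sum_{a} p(s|a,s')P(a|s')$. Writing the $k$-th column of $\bPi$ as $(\mathbf{w}^k,\xi_k)^\transpose$ with $\mathbf{w}^k\in\bR^{|\cS|}$, the $k$-th column of $(\bI-\bQ^0)\bPi=\bD$ decouples into $(\bI-\widetilde{\bQ})\mathbf{w}^k = \mathbf{e}_{u_k}$ and $\xi_k = 1$, where $\mathbf{e}_{u_k}$ is the unit vector at coordinate $u_k$. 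The dummy last coordinate only stores a $1$, so the claim reduces to identifying $\mathbf{w}^k_{v_k}$ with $P(v_k|u_k)$.

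For Step~2, let $\bpi^k$ denote the single-pair solution $(\bI-\bQ^{u_k})\bpi^k=\mathbf{e}_{u_k}$ from the preceding subsection, for which $P(v_k|u_k)=\bpi^k_{v_k}$ by definition. Since $\bQ^{u_k}$ is exactly $\widetilde{\bQ}$ with its $u_k$-th row zeroed, one has the rank-one relation $\widetilde{\bQ}-\bQ^{u_k}=\mathbf{e}_{u_k}\mathbf{e}_{u_k}^\transpose\widetilde{\bQ}$, so
\[
(\bI-\widetilde{\bQ})\bpi^k \;=\; (\bI-\bQ^{u_k})\bpi^k \;-\; [\widetilde{\bQ}\bpi^k]_{u_k}\,\mathbf{e}_{u_k} \;=\; \bigl(1-[\widetilde{\bQ}\bpi^k]_{u_k}\bigr)\mathbf{e}_{u_k}.
\]
The scalar $[\widetilde{\bQ}\bpi^k]_{u_k}=\sum_{s'}\widetilde{\bQ}_{u_k,s'}\bpi^k_{s'}$ is precisely the expected one-step return flow into $u_k$; under the cycle-free Condition~(i) of Theorem~\ref{theor:theor-1}---the same structural hypothesis under which the preceding subsection's recursion actually computes a reaching probability---this vanishes, so $\bpi^k$ also solves $(\bI-\widetilde{\bQ})\bpi^k=\mathbf{e}_{u_k}$.

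For Step~3, invertibility of $\bI-\widetilde{\bQ}$ follows along exactly the lines of Proposition~\ref{theor:invertible-IQ} (the transpose $\widetilde{\bQ}^\transpose$ is sub-stochastic with no recurrent class in the cycle-free regime), so the system $(\bI-\widetilde{\bQ})\pmb{\pi}=\mathbf{e}_{u_k}$ has a unique solution; this forces $\mathbf{w}^k=\bpi^k$, and reading the $v_k$-th coordinate yields $\bPi_{v_k,k}=\mathbf{w}^k_{v_k}=\bpi^k_{v_k}=P(v_k|u_k)$. The delicate step is Step~2: the vanishing of the return-flow scalar $[\widetilde{\bQ}\bpi^k]_{u_k}$ is not a purely linear-algebraic fact, it genuinely relies on the acyclicity of the MDP that is implicit to the paper's IRL setting (e.g.\ route choice on a road network). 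Without this hypothesis the composed system would compute expected visit counts rather than reaching probabilities, and the two systems would in general disagree at $v_k$; the remainder of the argument is routine.
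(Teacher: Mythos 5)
Your proof is correct in substance, but it takes a genuinely different route from the paper's. You exploit the block structure of $\bI-\bQ^0$ to decouple the $k$-th column into $(\bI-\widetilde{\bQ})\mathbf{w}^k=\mathbf{e}_{u_k}$, and then compare this directly with the single-pair system \eqref{eq:pi-linear-system}, showing the two solutions coincide because the rank-one discrepancy $[\widetilde{\bQ}\bpi^k]_{u_k}\mathbf{e}_{u_k}$ (the return flow into $u_k$) vanishes in the cycle-free regime. The paper never compares the two systems: it adjoins an artificial source state $h$ from which $u_k$ is reached with probability one, writes the reaching-probability recursion \eqref{eq:recursive-pi'} with the boundary condition placed at $h$ rather than at $u_k$ (so the $u_k$-row is never zeroed), asserts (``one can show'') that this recursion yields $P(\cdot|u_k)$, and then shows by a purely algebraic rearrangement that it is exactly the $k$-th column of $(\bI-\bQ^0)\bPi=\bD$. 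What your route buys is precision: you make explicit that the composed system agrees with \eqref{eq:pi-linear-system} exactly when there is no positive-probability return to $u_k$ (a Condition~(i)-type acyclicity of the policy-induced chain), a hypothesis absent from the theorem statement but which the paper's proof also needs implicitly, since without it both \eqref{eq:recursive-pi} and the composed system compute expected visit counts rather than reaching probabilities. What the paper's route buys is an argument that does not pass through the row-zeroed system at all and that explains directly where the extra row $|\cS|+1$ of $\bD$ comes from (the unit mass injected at $h$). One caution on your Step~3: Proposition~\ref{theor:invertible-IQ}'s ``no recurrent class'' argument uses the zeroed row at $\bar{s}$, which $\widetilde{\bQ}$ does not have; under your standing acyclicity assumption it is cleaner to get invertibility of $\bI-\widetilde{\bQ}$ from the vanishing-powers argument in the proof of Theorem~\ref{theor:theor-1}(i), after which your identification $\mathbf{w}^k=\bpi^k$ and the conclusion stand.
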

The proof of Theorem \ref{theo:th2} is given in the Appendix. As a result, the gradient of $P(v_k|u_k)$ w.r.t.  a parameter $\theta_t$ can also be obtained via solving the following linear system
\begin{equation}\label{eq:gradient-missing-prob}
    \frac{\partial \pmb{\Pi}}{\partial \theta_t} = (\bI-\bQ^0)^{-1}\frac{\partial \bQ^0}{\partial \theta_t}\pmb{\Pi}.
\end{equation}
Moreover,
similarly to the proof of Proposition \ref{theor:invertible-IQ}, we can show that $(\bQ^0)^\transpose$ is a \textit{sub-stochastic} matrix with no recurrent class. As a result, $\bI-(\bQ^0)^\transpose$ is invertible. So $\bI-(\bQ^0)$  is also invertible,
which ensures that the linear system in Theorem \ref{theo:th2} and \eqref{eq:gradient-missing-prob} always have unique solutions.

Algorithm \ref{algo:LL-missing} describes basic steps to compute the log-likelihood value and its gradients in the case of missing data. The total number of linear systems to be solved in Algorithm \ref{algo:LL-missing} is $(T+1)(1+N^{\textsc{dest}})$, where $N^{\textsc{dest}}$ stands for the number of groups of trajectories, where each group contains trajectories that share the same set of zero-reward absorbing states. Clearly, the  number of linear systems to be solved does not depend on the number of missing segments in the demonstrated trajectories.
Moreover,  Algorithm \ref{algo:LL-missing} can be implemented in a parallel manner, which would help to speed up the computation.  
If the data is complete (no missing segment), then we just need to remove Step 2. Note that in this case, we only need to solve $T+1$ systems of linear equations to obtain the log-likelihood value and its gradients, and this number does not depend on the number of demonstrated trajectories.

\begin{algorithm}[htb]
	\SetKwRepeat{Do}{do}{while}
	\comments{Log-likelihood computation of demonstrated trajectories with missing data} \\
	\Begin
	{
		1. Compute $\bZ$ and its gradients  $\partial \bZ /\partial \theta_t$ using  \eqref{eq:grad-bigZ-compute}. \\
		2. For each set of zero-reward absorbing states, compute the probabilities (and their gradients) of incomplete pairs by solving systems $(\bI - \bQ^0)\bPi = \bD$ and \eqref{eq:gradient-missing-prob}. \\
		3. Compute the log-likelihood of incomplete trajectories and its gradients.
	}
	\caption{(\textbf{\textit{Composition algorithm}})\label{algo:LL-missing}}
\end{algorithm}

Similarly to the previous section,  for each set of zero-reward absorbing states,  Step 2 requires to solve $T+1$ systems of linear equations that all involve the matrix $\bI-\bQ^0$. The $\bL\bU$ factorization is also a convenient approach to achieve good performance. Technical speaking, one can firstly decompose $\bI-\bQ^0$ into a lower triangular matrix $\bL$ and an upper  triangular matrix $\bU$ and use these two matrices to solve the $T+1$ linear systems in Step 2.

\section{Experiments}\label{sec:experiments}
This section evaluates the empirical performance of our approach using a real-world dataset.
We compare our decomposition method with the EM method and the (naive) one that simply ignores the missing segments in the dataset.
 Our dataset contains larger numbers of states/actions/trajectories, as compared to those used with the EM  in previous studies \citep{IRL_Shahryari2017,IRL_Bogert2016}, which makes it more suitable to illustrate the scalability of our approach in handling the issue of missing data.
For the sake of comparison, we also compare our approach with the traditional \textit{value iteration} \citep{Ziebart2008maximum} in the case of no-missing data.

{\bf{Dataset.}} The dataset contains 1832 real-world trajectories of taxi drivers.
The road network consists of 7288 links, which are states in the model.
The action is moving from a link to a consecutive link with no uncertainty. There are four features, in which three boolean features are left turn, U-turn, and the incidence matrix, and one real-valued feature is the traveling time between consecutive links. This dataset has been used in some  route choice modeling studies \citep{Fosgerau2013RL,Mai2015Nested}.  

{\bf{Generating Missing Dataset.}} 
To evaluate the performances of different approaches in the context of missing data, we take the full trajectories from the above dataset and remove some observed links to generate missing datasets. This allows us to assess our algorithm with different ``\textit{missing levels}''. We define
\emph{missing probability}  as the probability that a link (except the origin and the destination links) is removed from a trajectory in the dataset. To increase the difficulty of recovering the reward function, the removed links in a trajectory are consecutive. Taking a trajectory of length $l$ as an example, the length of the missing segment follows a binomial distribution with $l-2$ trials, and the missing probability as the success probability. For each trajectory, such a length is randomly sampled, and a segment of this length is randomly removed from the trajectory. In the experiment, there are 10 different random runs to account for the randomness in the generation of the missing dataset.

In this setting, as the missing probability increases, the expected length of missing segments in the dataset increases. This, in turn, makes the learning of the reward function more difficult as observed in the experimental result.
To assess the performance of the reward recovery, the log-likelihood of the no-missing dataset using the learned reward function is computed. The higher the log-likelihood, the better the reward function is.

{\bf{Comparison between Composition and MaxEnt Algorithms on No-Missing Dataset.}} To compare the composition algorithm with the popular MaxEnt IRL algorithm (i.e., \textit{value iteration}) \citep{Ziebart2008maximum}, we run both algorithms on the no-missing dataset. They both give similar reward functions, i.e., $(-0.993, -0.999, -21.413, -1.973)$ (MaxEnt) and $(-0.995, -1.019, -20.248, -1.971)$ (composition), which have no significant difference in the log-likelihood ($-2646.146$ and $-2646.004$, respectively). On the other hand, the iterative algorithm to compute the log-likelihood (and its derivative) of MaxEnt incurs much more time ($96.1$s) than that of the composition algorithm ($1.50$s). So, our composition algorithm is about 60 times faster than the MaxEnt while returning a similar reward function.  

{\bf{Comparison between Composition and EM Algorithms on Missing Dataset.}}
Fig.~\ref{fig:comparison-results} shows a comparison of different  approaches in terms of the reward recovery and computing time. 
Regarding the performance, Fig.~\ref{fig:loglikelihood} shows the log-likelihood of no-missing dataset with the learned rewards from different methods.
When the missing probability is less than $0.4$, the information from missing segments is not enough to make a visible difference among all the methods. However, as the missing probability increases to $0.9$, the use of missing segments makes a significant rise in the performance of the composition method over the others: the composition method's log-likelihood achieves both a larger value and a smaller $95\%$ confidence interval.
Surprisingly, EM methods are outperformed by the naive method using only connected segments when the missing probability is large. It is because the length of missing segments increases in this case, which makes the sampling of BFS-$5$ and BFS-$8$ inaccurate. This illustrates the adverse effect of sampling a subset of trajectories when the missing segment length is large. On the other hand, increasing the depth of BFS increases the execution time significantly (Fig.~\ref{fig:exectime}). Regarding our composition method, its execution time to compute the log-likelihood is remarkably smaller than that of the EM\footnote{The execution time is measured using a computer with: i7-6700 CPU @ 3.40GHz and 16GB of RAM.}. It is also interesting to see that, as expected, the computing time of the composition method does not grow much as the missing probability increases. These observations demonstrate the scalability of the composition method in a real-world application.







\vspace{-0.9em}
\begin{figure}[htb]
    \centering
    	\begin{subfigure}{.48\textwidth}
    	    \centering            \includegraphics[width = \linewidth]{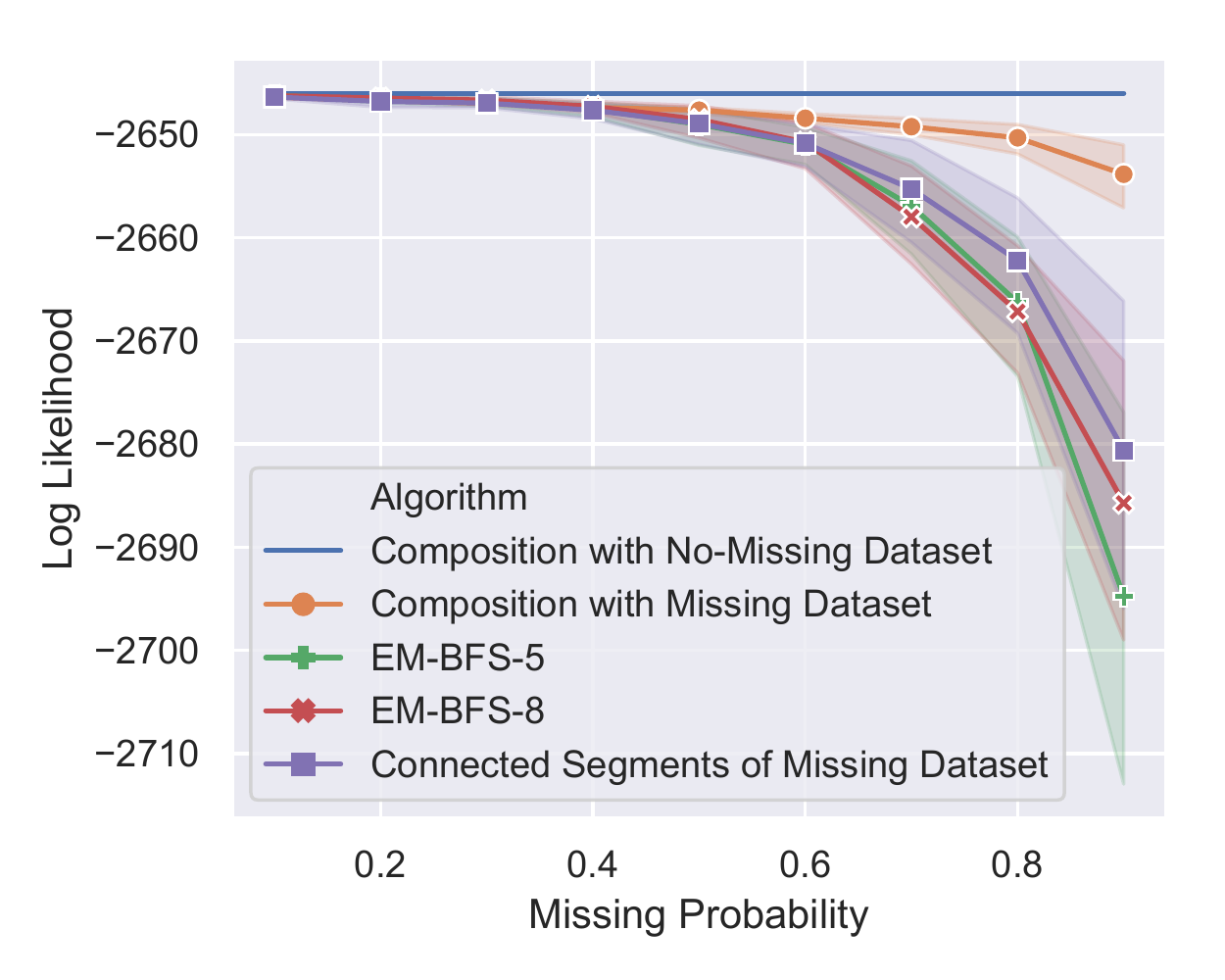}
    	    \caption{Log-likelihood and its $95\%$ confidence interval.}
            \label{fig:loglikelihood}
    	\end{subfigure} \ 
    	\begin{subfigure}{.48\textwidth}
    	    \centering
            \includegraphics[width = \linewidth]{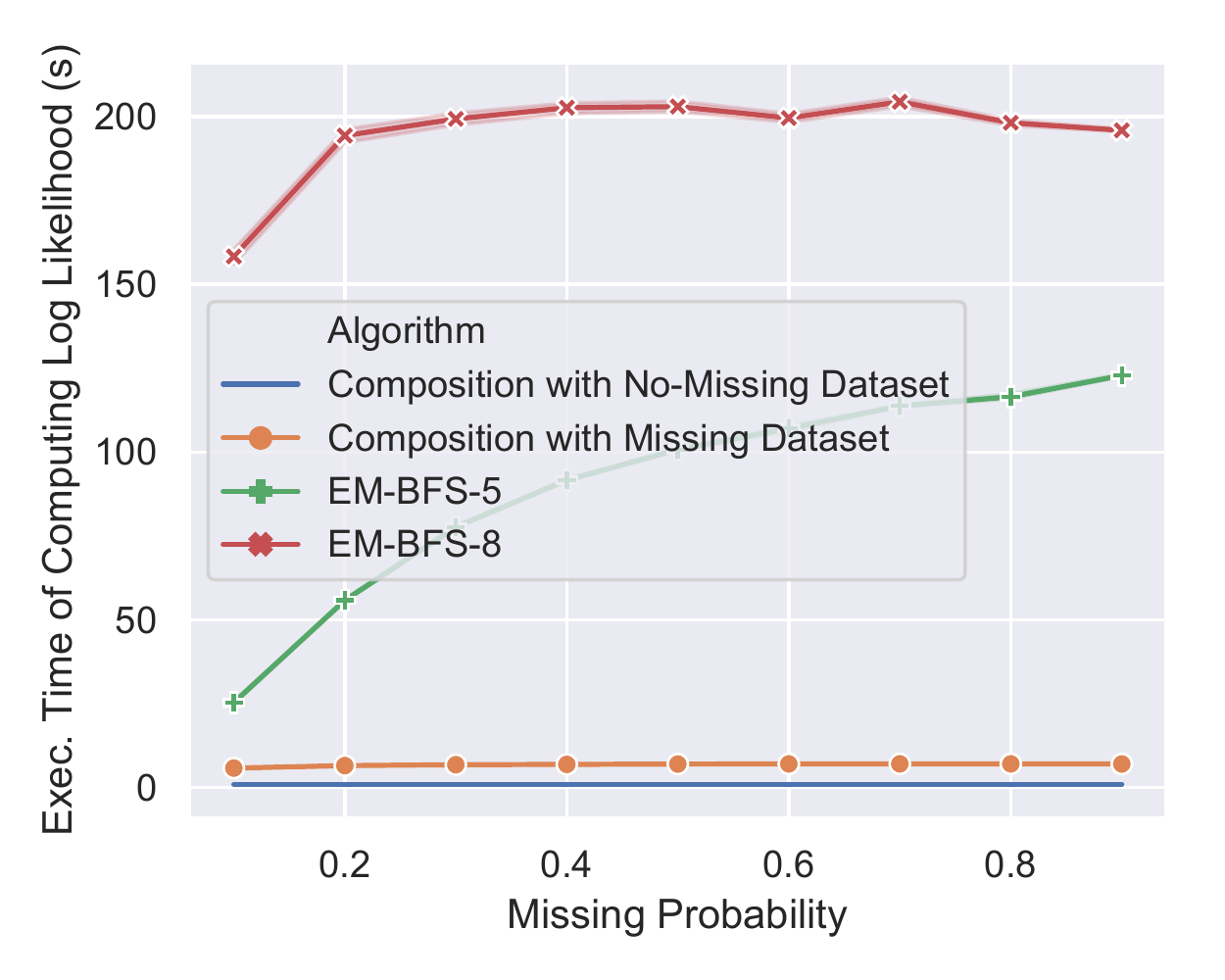}
            \caption{Execution time of computing the log-likelihood.}
            \label{fig:exectime}
    	\end{subfigure} 
    \caption{Comparison results between different methods: \emph{Composition with Missing Dataset} denotes the composition method on missing dataset, \emph{EM-BFS-$x$} denotes the EM algorithm with BFS of depth $x$, and \emph{Connected Segments of Missing Dataset} denotes the composition method trained with only the connected segments in the missing dataset. The log-likelihood of composition method on no-missing dataset is also plotted for reference, denoted as \emph{Composition with No-Missing Dataset}.}
    \label{fig:comparison-results}
\end{figure}


\section{Conclusion}\label{sec:concl}
We have presented a novel IRL training algorithm  that clearly outperforms previous approaches, in both cases of no-missing or missing data. The main advantage of our approach is that the IRL training procedure is performed via solving a number of systems of linear equations, and this number does not depend on the number of demonstrated trajectories in the case of complete data, and not depend on the number of missing segments in the missing case. 
This makes our approach highly scalable for large-scale applications.
Many applications would potentially benefit from our approach, for example, problems of learning expert's reward functions when datasets are not entirely available due to technical issues or privacy concerns. 
In future work, we plan to investigate generative adversarial ideas \citep{ho2016generative,goodfellow2014generative} for imitation learning in the context of missing information. 


\bibliographystyle{plainnat_custom}
\bibliography{refsIRL}

\newpage

\section*{Appendix}

\subsection*{Proof of Theorem \ref{theor:theor-1}}
\text{We first prove the convergence  under Condition (i)} by introducing the following lemma
\begin{lemma}\label{lemma:theo1-lm1}
For any sequence $(s_1,\ldots,s_K)$ with $s_j\in \cS$, $j=1,\ldots,K$, and $K>|\cS|$, there is at least one pair $(s_i,s_{i+1})$ such that
\[
p(s_{i+1}|a,s_i) = 0,\ \forall a\in \cA.
\]
\end{lemma}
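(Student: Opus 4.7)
The plan is to combine the pigeonhole principle with Condition (i) of Theorem~\ref{theor:theor-1}. The lemma is stated in the context of proving the Condition~(i) part of Theorem~\ref{theor:theor-1}, so I assume Condition~(i) throughout. The intuition is that a long sequence in a finite state space must revisit a state, producing a cyclic subsequence, and Condition~(i) will then force one of the single-step transitions in that cycle to be impossible under every action.

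First, since the sequence $(s_1,\ldots,s_K)$ has length $K > |\cS|$, the pigeonhole principle guarantees two indices $1 \leq j_1 < j_2 \leq K$ with $s_{j_1} = s_{j_2}$. Consider the subsequence $(s_{j_1}, s_{j_1+1}, \ldots, s_{j_2})$. This is a sequence of states whose first and last element coincide, so it is precisely the kind of sequence to which Condition~(i) applies.

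Next, applying Condition~(i) to this cyclic subsequence yields
\[
\prod_{k=j_1}^{j_2-1}\left(\sum_{a\in\cA} p(s_{k+1}\mid a, s_k)\right) = 0.
\]
A product of nonnegative real numbers is zero iff at least one factor is zero. Hence there exists an index $i$ with $j_1 \leq i \leq j_2 - 1$ such that $\sum_{a \in \cA} p(s_{i+1}\mid a, s_i) = 0$. Since each $p(s_{i+1}\mid a, s_i) \geq 0$, this forces $p(s_{i+1}\mid a, s_i) = 0$ for every $a \in \cA$, which is exactly the claim of the lemma.

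The proof is essentially a two-line chain: pigeonhole $\Rightarrow$ cycle $\Rightarrow$ Condition~(i) $\Rightarrow$ vanishing transition. The only subtlety — more a matter of care than difficulty — is to note that the pair $(s_i, s_{i+1})$ extracted this way lies inside the original sequence (since $j_1 \geq 1$ and $i+1 \leq j_2 \leq K$), so it is a legitimate consecutive pair of $(s_1,\ldots,s_K)$, as required.
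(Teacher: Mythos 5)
Your proof is correct and follows the same route as the paper: pigeonhole on $K>|\cS|$ to find a repeated state, then Condition (i) applied to the resulting cyclic subsequence, with nonnegativity turning a vanishing product into a vanishing factor and a vanishing sum into vanishing individual terms. The paper compresses the last two steps into one sentence, but the argument is identical.
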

\begin{proof}
Because $K> |\cS|$ and  there are $K$ states in  $(s_1,\ldots,s_K)$, there are two identical states in the sequence, i.e., $\exists h,k\in\{1,\ldots, K\}$ such that $h<k$ and  $s_h = s_k$. The hypothesis of (i) guarantees that there exists $i$, $h\leq i<k$, such that 
\[
\sum_{a\in\cA}p(s_{i+1}|a,s_i) = 0,
\]
which leads to the desired result.
\end{proof}

Now we consider, for any $n\in \mathbb{N}$, matrix $\bM^n = \underbrace{\bM\times \bM\times \ldots\times \bM}_{\text{$n$ times}}$ with entries 
\[
\begin{aligned}
\bM^n_{s,s'} &= \sum_{\substack{s_1,\ldots,s_n\in\cS\\s_1  =s, s_n=s'}} \left(\prod_{i=1}^{n-1}\bM_{i,i+1}\right)\\
 &= \sum_{\substack{s_1,\ldots,s_n\in\cS\\s_1  =s, s_n=s'}} \prod_{i=1}^{n-1} \left(\sum_{a\in\cA} p(s_{i+1}|a,s_i)e^{r(s_{i+1}|s_i,\theta)} \right)
\end{aligned}
\]
Using Lemma \ref{lemma:theo1-lm1} we have that, if $n>|\cS|$, then 
\[
\prod_{i=1}^{n-1} \left(\sum_{a\in\cA} p(s_{i+1}|a,s_i)e^{r(s_{i+1}|s_i,\theta)} \right) = 0
\]
for any sequence $\{s_1,\ldots,s_n\}\in \cS^n$. So, $\bM^n_{s,s'}=0$ for any $s,s'\in\cS$ and $n>|\cS|$. In other words, 
\[
\lim_{n\rightarrow \infty }\bM^n= 0.
\]
This implies that the modulus of the eigenvalues of $\bM$ lie within the unit disc, and consequently, $(\bI-\bM)$ is invertible.

To prove the convergence under Condition (ii), we write matrix $\bM$  in the following canonical form
\[
\bM= 
\left(\begin{array}{c|c}
\pmb{\Sigma} & \textbf{R} \\ 
\hline
\textbf{O}_1 & \textbf{O}_2
\end{array}\right),
\]
where 
$\left(\begin{array}{c|c}
\textbf{O}_1 & \textbf{O}_1
\end{array}\right)$ and $\left(\begin{array}{c}
\textbf{R} \\ 
\hline
 \textbf{O}_2
\end{array}\right)$  
are the last rows and last columns of $\bM$ corresponding  the absorbing states $s\in\cD$, and $\textbf{O}_1$ and $\textbf{O}_2$ are matrices of zero elements. 
We define a matrix $\bM'$ of the same size with $\bM$ as
\[
\bM' = 
\left(\begin{array}{c|c}
	\pmb{\Sigma} & \textbf{R}' \\ 
	\hline
	\textbf{O}_1 & \bI
\end{array}\right),
\]
where $\bI$ is the identity matrix of size $(|\cD|\times|\cD|)$ and
$\textbf{{R}}'$ is a matrix of size $(|\cS|-|\cD|)\times |\cD|$ with entries
\[
\textbf{R}'_{s,k} = \frac{1}{|\cD|}\left(1-\sum_{s'\in\cS}\bM_{s,s'}\right)>0,\;\forall s\in\cS\backslash \cD, \forall k\in\cD
\]
So, under the hypothesis of the theorem, it is easy to verify that $\bM'$ is a transition matrix of an arbitrary absorbing Markov chain. Thus, we have the following well-known result
\begin{equation*}
\lim_{n \rightarrow \infty} \pmb{\Sigma}^n = 0.
\end{equation*}
So
\[\lim_{n\rightarrow \infty }\bM^n=\lim_{n\rightarrow \infty } 
\left(\begin{array}{c|c}
\pmb{\Sigma}^n & \pmb{\Sigma}^{n-1}\textbf{R} \\ 
\hline
\textbf{O}_1 & \textbf{O}_2
\end{array}\right) = 0.
\]
Similarly to the proof of the case (i), $(\bI-\bM)$ is invertible. 

\subsection*{Proof of Proposition \ref{prop:convergence-value-iteration}}

First, we assume that Condition (i) holds. At iteration $k>0$ of the value iteration procedure, we write
$
\bz^k =\bM^{k}\bz^0+ \sum_{j=0}^{k-1} \bM^{j} \bb, 
$
where $\bM^k$ stands for $\bM\times\ldots\times \bM$ ($k$ times).
Using the proof of Theorem \ref{theor:theor-1}, we have $\bM^k = 0$ 
for any $k>|\cS|$. So, we can write 
$
\bz^k = \sum_{j=0}^{|\cS|}\bM^j\bb,     
$
which does not depend on $\bz^0$ and $k$. This remark indicates that $\bz^k$ converges to the unique fixed point solution $\sum_{j=0}^{|\cS|}\bM^j\bb$ after $|\cS|$-th iterations. This completes the proof for Condition (i).

We now move to Condition (ii).
We define $f(\bz) = \bM \bz + \bb$. Assume that Condition (i) of Theorem \ref{theor:theor-1} holds, given a  vector $\bz\in[0,1]^{|\cS|}$, we have 
$
f(\bz)_s = \sum_{s'\in \cS}\bM_{s,s'}\bz_{s'} + \bb_s \leq \sum_{s'\in \cS}\bM_{s,s'}\max_{s'\in S}\{\bz_{s'}\} + \bb_s  \leq  1.
$
So, $f(\bz)$ is a mapping from $[0,1]^{|\cS|}$ to itself. According to the Brouwer fixed-point theorem \citep{Granas2013fixed}, there is at least one fixed point solution in $[0,1]^{|\cS|}$. As stated in Theorem  \ref{theor:theor-1}, there always exists a unique fixed point solution under Condition (i), meaning that the fixed point solution $\bz^*$ always lies in $[0,1]^{|\cS|}$. Moreover, given two points $\bz^1,\bz^2 \in [0,1]^{|\cS|}$ we have
\[
\begin{aligned}
||f(\bz^1) - f(\bz^2)||_{\infty} &=\max_{s\in \cS}\left|\sum_{s'\in\cS} \bM_{s,s'} \bz^1_{s'} - \bM_{s,s'} \bz^2_{s'}\right|\\
&\leq \max_{s\in \cS}\left(\sum_{s'\in\cS}\bM_{s,s'}\right) \max_{s}\left|\bz^1_s-\bz^2_s\right| \\
&\leq \tau ||\bz^1-\bz^2||_\infty,
\end{aligned}
\]
where $\tau = \max_{s}\left(\sum_{s'\in\cS}\bM_{s,s'}\right)<1$. So, $f(\bz)$ is a contraction mapping in $[0,1]^{|\cS|}$, meaning that the value iteration always converges to a fixed point solution from any starting point. As shown previously, this fixed point solution is unique and lies in $[0,1]^{|\cS|}$. We complete the proof for Condition (i).

\subsection*{Proof for Theorem \ref{theo:th2}}
First, let us consider a pair of missing data $(u_k,v_k)$. We create an artificial state $h$ such that
\[
\begin{cases}
 \sum_{a\in \cA}p(h|a,s) = 0 &\forall s\in \cS\\
 \sum_{a\in \cA}p(s|a,h) = 0 & \forall s\in \cS\backslash \{u_k\}\\
 \exists a\in \cA\text{ such that } p(u_k|a,h) = 1.
\end{cases}
\]
Basically, we require that it is impossible to reach $h$ from other states in $\cS$ and  from $h$ we can  only reach $v_k$ with probability 1. Now we define the  following recursive equations
\begin{equation}\label{eq:recursive-pi'}
    \pi^k(s) = 
    \begin{cases}
    \sum_{s'\in \cS}\sum_{a\in \cA}p(s|a,s')P(a|s')\pi^k(s') & \forall s\in \cS\\
    1 & s=h.
    \end{cases}
\end{equation}
One can show that if function $\pi^k:\cS\cup\{h\}\rightarrow [0,1]$ satisfies \eqref{eq:recursive-pi'}, then $\pi^k(s) = P(s|u_k)$, $\forall s\in\cS$. 
Now, we define a matrix $\bQ^k$ of size $(|\cS|+1) \times (|\cS|+1)$ with elements
$\bQ^k_{s,s'} =  \sum_{a\in \cA}p(s|a,s')P(a|s')$, $\forall s,s'\in \cS\cup\{h\}$
and a vector $\bq$ of size $|\cS|+1$ with all zero elements except $\bq_h = 1$. We also number state $h$ as $|\cS|+1$, so the last column and row of $\bQ^k$ and the last element of $\bq$ correspond to state $h$. So, \eqref{eq:recursive-pi'} can be written as 
\begin{equation}\label{eq:linear-system-pik}
\bpi^k = \bQ^k \bpi^k+\bq.    
\end{equation}
Note that the last row of $\bQ^k$ is an all-zero vector. 
We  decompose \eqref{eq:linear-system-pik} as
\[\begin{aligned}
\bQ^k\bpi^k+\bq  &=
\begin{pmatrix}
\bQ^k_{1,1} & \bQ^k_{1,2} & \cdots & \bQ^k_{1,|\cS|+1} \\
\bQ^k_{2,1} & \bQ^k_{2,2} & \cdots & \bQ^k_{2,|\cS|+1} \\
\vdots  & \vdots  & \ddots & \vdots  \\
0 & 0 & \cdots & 0
\end{pmatrix}
\begin{pmatrix}
\bpi^k_1\\
\bpi^k_2\\
\vdots\\
\bpi^k_{|\cS|+1}
\end{pmatrix} +\bq \\
&=\bQ^0 \bpi^k+\begin{pmatrix}
0 & 0 & \cdots & \bQ^k_{1,|\cS|+1} \\
0 & 0 & \cdots & \bQ^k_{2,|\cS|+1} \\
\vdots  & \vdots  & \ddots & \vdots  \\
0 & 0 & \cdots & 0
\end{pmatrix}
\begin{pmatrix}
\bpi^k_1\\
\bpi^k_2\\
\vdots\\
\bpi^k_{|\cS|+1}
\end{pmatrix} +\bq \\
&= \bQ^0 \bpi^k+ \begin{pmatrix}
\bQ^k_{1,|\cS|+1}\\
\bQ^k_{2,|\cS|+1}\\
\vdots\\
0
\end{pmatrix} \bpi^k_{|\cS|+1} + \bq = \bQ^0\bpi^k + \bD_{:,k}
\end{aligned}
\] 
where $\bD_{:,k}$ is the $k$-th column of $\bD$. So, $\bpi^k$ is a solution to the following system
\[
(\bI-\bQ^0)\bpi^k = \bD_{:,k},
\]
which also means that if matrix $\pmb{\Pi}$ is a solution to the system of linear equations $(\bI-\bQ^0)\pmb{\Pi} = \bD$, then
\[
\bpi^k = \pmb{\Pi}_{:,k}\text{ and } \pmb{\Pi}_{v_k,k} = \bpi^k(v_k) = P(v_k|u_k),
\]
which is the desired result.

\end{document}